\newcommand{\mc}[1]{\mathcal{#1}}
\newcommand{\argmin}{\mathop{\mathrm{arg\,min}}}
\newcommand{\argmax}{\mathop{\mathrm{arg\,max}}}
\newcommand{\mbf}[1]{\mathbf{#1}}
\newcommand{\mb}[1]{\mathbb{#1}}
\newcommand{\hst}{\hspace{0.2cm}}
\newtheorem{theorem}{Theorem}
\newtheorem{definition}{Definition}
\title{Preference-Optimized Pareto Set Learning for Blackbox Optimization}
\author{%
  Zhang Haishan $^*$ \\
  Department of Computational Biology and Medical Sciences\\
  The University of Tokyo\\
       \And
      Diptesh Das $^{*\dagger}$\\
      Department of Computational Biology and Medical Sciences\\
  The University of Tokyo\\
       \AND
      Koji Tsuda$^\dagger$ \\
      Department of Computational Biology and Medical Sciences\\
  The University of Tokyo\\
}
\begin{document}

\maketitle

\def\thefootnote{*}\footnotetext{Equal contribution}
\def\thefootnote{$\dagger$}\footnotetext{Corresponding author~(diptesh.das@edu.k.u-tokyo.ac.jp, tsuda@k.u-tokyo.ac.jp)}

\begin{abstract}
Multi-Objective Optimization (MOO) is an important problem in real-world applications. However, for a non-trivial problem, no single solution exists that can optimize all the objectives simultaneously. In a typical MOO problem, the goal is to find a set of optimum solutions (Pareto set) that trades off the preferences among objectives. Scalarization in MOO is a well-established method for finding a finite set approximation of the whole Pareto set (PS). However, in real-world experimental design scenarios, it's beneficial to obtain the whole PS for flexible exploration of the design space. Recently Pareto set learning (PSL) has been introduced to approximate the whole PS. 
PSL involves creating a manifold representing the Pareto front of a multi-objective optimization problem. A naive approach includes finding discrete points on the Pareto front through randomly generated preference vectors and connecting them by regression. However, this approach is computationally expensive and leads to a poor PS approximation. We propose to optimize the preference points to be distributed evenly on the Pareto front. Our formulation leads to a bilevel optimization problem that can be solved by e.g. differentiable cross-entropy methods. We demonstrated the efficacy of our method for complex and difficult black-box MOO problems using both synthetic and real-world benchmark data.\looseness=-1    
\end{abstract}
\section{Introduction}
Many real-world applications demand the simultaneous optimization of multiple objectives. An efficient and practically useful method to find optimal solutions to a multi-objective optimization (MOO) problem is an active area of research in many real-world experimental design problems and this became more prevalent in the era of robot-driven autonomous design~\citep{abolhasani2023rise,macleod2020self,langner2020beyond,christensen2021data}. Most real-world applications are blackbox in nature where the true analytical forms of the objective functions are not known in advance and this makes solving the real-world MOO problem more difficult. Hence, we often need to rely on blackbox optimization (BO) where surrogate models are built in an online fashion while keeping the laborious and expensive wet-lab experiments or costly computer simulations to the minimum. Therefore, given a computational and monetary budget, the challenge of a BO is to find an efficient and useful method that can find the optimal solutions as quickly as possible~\citep{ojih2022efficiently,terayama2020pushing,seifrid2022routescore,ueno2016combo}. However, in a typical MOO problem, there exists no single solution that can minimize all the objectives at the same time and the goal is to find a set of optimum solutions called Pareto set (or its image called Pareto font) that trades off the preferences among objectives. 
Different MOO algorithms have been developed over years~\citep{deb2002fast,zhang2007moea,shahriari2015taking,sener2018multi,lin2019pareto,mahapatra2020multi,liu2021profiling,zhao2023hypervolume}. Most of the algorithms focused on generating a single solution, or a finite set of Pareto set solutions.  
However, under the mild conditions for a continuous optimization problem with $m$ objectives, the Pareto set lies on a $(m-1)$ dimensional continuous manifold. Therefore, finding a finite set approximation of the entire Pareto set is often not so useful from a decision maker's perspective as it may not contain the solutions of desired preferences.\looseness=-1 

Other issues are scalability and flexibility, which are important in BO, specifically in robot-driven autonomous design. It is not known beforehand which preference vectors will lead to Pareto solutions, and the problem becomes more difficult in BO where the true analytical forms of the objectives are not known. It is computationally intractable to run a learning algorithm for every new preference vector (which is possibly infinite), and it does not allow flexible explorations on the design space unless we have access to all Pareto solutions. Hence, it is desirable to train a set model that can be queried at inference time for any new preference vector and allows flexible iterative exploration of the design space~\citep{terayama2020pushing,kaiya2022understanding}. 
\begin{figure}[t]
\centering
\includegraphics[width=\linewidth]{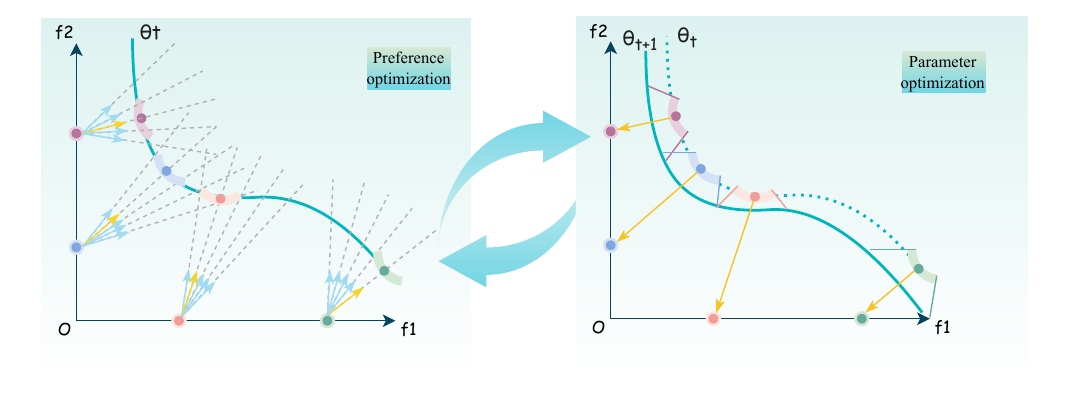}
\caption{An intuitive illustration of the proposed PO-PSL. A uniformly distributed set of preference vectors (left) and the corresponding set model parameters (right) are dynamically selected in alteration. The dots on the orthogonal axes represent the coordinates of a set of reference points.}
\label{diagram}
\end{figure}
Recently, there have been a few studies to model the entire Pareto set/font of a MOO, specifically using deep neural network models~\citep{lin2020controllable,pmlr-v119-ma20a,navon2021learning,ruchte2021scalable,lin2022pareto,chen2022multi,dimitriadis2023pareto}. It has gained significant attention in the MOO community and applied to real-world applications such as drug design~\citep{jain2023multi}, multitask image classification~\citep{raychaudhuri2022controllable,wang2022computing}, multiobjective neural combinatorial optimization~\citep{lin2022-PSL-MOCO}. Recently a data-augmented PSL strategy~\citep{lu2024you} and a hypervolume-based PSL~\citep{zhang2024hypervolume}  have been proposed. Similar to PSL, the conditional model has been recently used to generate a set of diverse solutions in reinforcement learning~\citep{yang2019generalized,abels2019dynamic,basaklar2022pd,hong2022confidence,swazinna2022user}.

However, most of these existing PSL methods can recover only a part of the entire Pareto set/font and struggle to converge to the true Pareto set/font. This is specifically prevalent for complex problems where the Pareto font is non-convex, degenerate, irregular (i.e. disconnected), or problems having more than two objectives. These methods are difficult to train, cannot recover solutions at the boundary, and are not so accurate. This is mainly due to their inefficient training strategy or problem formulation.
To address the limitations of existing PSL methods, we propose a preference-optimized Pareto set learning (PO-PSL) method that is computationally more efficient and also more accurate. 
%
%
We combined a regression model of PSL with dynamic preference vector adaption using optimization-based modeling. This leads to a bilevel optimization algorithm that learns the optimal set model and the optimal preference vector in an end-to-end fashion. Thus, the proposed method can learn a uniformly distributed set of preference vectors which is further connected by a regression model of PSL to better approximate the continuous manifold of the Pareto set.
%
Therefore, the main contributions of this paper are summarized below.\looseness=-1
\begin{itemize}
\item We formulated the PSL problem as a bi-level optimization problem using optimization-based modeling. It accelerates the learning process of the Pareto set model and leads to faster convergence to the true Pareto font.
\item We provided a preference-optimized PSL algorithm that can learn the entire Pareto font while optimizing the preference vector in an end-to-end fashion. 
\item
The proposed method can provide a better approximation of the true Pareto font while the existing PSL algorithms can only learn a part of it and struggle in the case of non-convex, irregular, and degenerate Pareto fonts. 
\item We conducted comprehensive experiments using both synthetic and real-world benchmark data to demonstrate the effectiveness of our method.  
\end{itemize}
\paragraph{Code.} The source code is available at \url{https://github.com/tsudalab/POPSL}.
\section{Problem Statement}
\paragraph{Blackbox optimization (BO).} This is an inverse problem where we search for the optimal solutions in the design space ($\mc{X}$) corresponding to the desired objectives in   the objective space ($\mc{Y}$). Examples of BO are material design~\citep{hase2018chimera,terayama2020pushing}, peptide design~\citep{murakami2023design,zhu2024sample}, etc. where we often don't know the true analytical form of the objective functions and their experimental evaluations are also quite expensive. Hence, given a computational and resource budget, the goal of a typical BO problem is to iteratively build a surrogate model to approximate the true objectives and use an acquisition function to search for optimal solutions. Several single objective~\citep{brochu2010tutorial,shahriari2015taking,frazier2018tutorial} and multi-objective BO have been developed over decades~\citep{khan2002multi,laumanns2002bayesian,knowles2006parego,zhang2009expensive,paria2020flexible} to find a single or a representative set of solutions that reflect the choice of a decision maker or meet the preferred design criteria of a problem.
\paragraph{Multi-objective optimization (MOO).}
In MOO we aim to solve the following optimization.
\begin{equation}\label{eq:moo}
 x^\ast = \argmin_{x \in \mc{X}} \mbf{f}(x) =  \big( f_1(x), f_2(x), \ldots, f_m(x) \big),
\end{equation}
where $x^\ast$ is a solution in the decision space $\mc{X} \in \mb{R}^n$ and $ \mbf{f}:\mc{X} \rightarrow \mb{R}^m$ is an $m$ dimensional vector-valued objective function. However, for a non-trivial problem, there exists no single solution that can minimize all the objectives $\{f_i\}_{i=1}^m$ simultaneously. Therefore, in a typical MOO problem, we are interested in a set of solutions $\mc{P}$, called the Pareto set. Each Pareto solution $x^\ast \in \mc{P}$ represents a different optimal trade-off among the objectives for a MOO problem~(\ref{eq:moo}) and the number of Pareto solutions can be infinite (i.e., $|\mc{P}| = \infty$). A few definitions in the context of MOO are given below.
\begin{definition}[Pareto Dominance] A solution $x$ in the design space is said to Pareto dominate another solution $x^\prime$, i.e. $x\prec x^\prime$, if \hst $\forall i \in [m], f_i(x) \leq f_i(x^\prime)$ and $\exists j \in [m]: f_j(x) < f_j(x^\prime)$.
\end{definition}
\begin{definition}[Pareto Optimality]
A solution $x^\ast$ in the design space is said Pareto optimal if there exists no solution $x$ that Pareto dominates $x^\ast$.
\end{definition}
\begin{definition}[Pareto Set and Pareto Font]
The set of all Pareto optimal solutions in the design space is called the Pareto set (PS) and is defined as $\mc{P}:=\{x^\ast | \nexists x: x\prec x^\ast, \forall x, x^\ast \in \mc{X} \in \mb{R}^n\}$. The corresponding image set in the objective space is called the Pareto font (PF) and is defined as $\mc{F}:=\{f(x^\ast) \in \mb{R}^m | x^\ast \in \mc{P}\}$. 
\end{definition}
%
%
%
%
\paragraph{Pareto estimation}
 Under mild conditions, it is known that for an $m$-dimensional MOO problem both the Pareto set $\mc{P}$ and the Pareto font $\mc{F}$ lie on $(m-1)$ dimensional manifold in design space ($\mc{X} \in \mb{R}^n$) and objective space ($\mc{Y} \in \mc{R}^m$) respectively~\citep{hillermeier2001generalized,wang2012regularity}. 
Moreover, in real-world applications, we often don't know the true analytical form of the objective functions, and its experimental evaluations are also quite expensive.
Therefore, navigating over an unknown manifold of Pareto set/font for a complex and difficult blackbox MOO is a challenging problem.
Hence, for real-world complex and expensive MOO problems, it is desirable to exploit the structure of the Pareto set or Pareto font as it can lead to faster convergence and or preferred exploration both in the design and objective spaces~\citep{lin2022pareto}.\looseness=-1 

Several attempts have been made to approximate the Pareto set or Pareto font over the decades. Earlier works in this direction focussed on population-based evolutionary multi-objective optimizations such as NSGA-II~\citep{996017}, R-NSGA-II~\citep{deb2006reference}, MOEA/D~\citep{zhang2007moea}, and others. 
 These population-based approaches can only provide a finite set approximation of the true Pareto set/font and are not scalable to a large deep learning-based MOO with millions of parameters.
 Therefore, gradient-based algorithms have been proposed to solve large scale deep MOO~\citep{desideri2012multiple,wang2017hypervolume,sener2018multi,lin2019pareto,mahapatra2020multi,mahapatra2021exact,liu2021stochastic,liu2021profiling}.
However, most of these gradient-based methods need to train multiple neural network models and cannot learn the continuous manifold of Pareto set/font which is crucial for flexible exploration, such as in BO.  
 %
 %
 %
\paragraph{Learning-based MOO.} Recently, there has been a growing interest among the machine learning community to model the continuous manifold of the Pareto set or Pareto font.
\citet{navon2021learning} proposed to learn a preference-conditioned hypernetwork in the context of deep learning-based MOO, such as deep multi-task learning where the Pareto set is the optimal neural network parameters ($\theta^\ast \in \Theta$). Once, the model is trained, it can generate any Pareto optimal neural network parameters ($\theta^\ast$) for any preference vector ($w\sim \mc{W}$).~\citet{pmlr-v119-ma20a} proposed an efficient algorithm to construct a locally continuous Pareto set/font using first-order expansion. On the other hand,~\citet{lin2022pareto} proposed a pioneering work to model the continuous manifold of Pareto set for expensive BO (similar to ours), where the author proposed to learn a parameterized deep neural network-based regression model to generate Pareto optimal solution in the design space ($x^\ast \in \mc{X}$) corresponding to any preference vector ($w\sim \mc{W}$). Another work~\citep{chen2022multi} which is similar in spirit but fundamentally different to that of our method, proposed an adaptive preference vector-based deep MOO, where the goal is to generate uniformly distributed (but discrete) Pareto optimal neural network parameters ($\theta^\ast$) conditioned on the preference vector ($w$). Moreover, the adaptive strategy proposed in~\citet{chen2022multi} can only provide a discrete approximation of the continuous manifold of neural network parameters ($\theta^\ast$), whereas our proposed adaptive strategy provides a continuous approximation of the true manifold of design parameters ($x^\ast$).\looseness=-1     

In this paper, we are interested in inverse problem ($\mc{Y} \mapsto \mc{X}$) same as in ~\citet{lin2022pareto}, whereby having direct access to $\mc{W}\mapsto \mc{X}$ mapping, a decision maker can readily generate preferred solutions in the design space ($\mc{X}$).
\paragraph{Advantages of learning-based MOO.}
There are two main advantages of such learning-based approaches: First. Once the regression model is learned, one can readily (in real-time) and freely explore the manifold of Pareto set/font for different objective preferences ($w$) without rerunning the underlying algorithm for infinite preference vectors, thus leading to a significant reduction in computational overhead. Second. The structural information based on the domain knowledge can be incorporated as constraints into the objective of the regression model for faster and more accurate convergence to the true Pareto set/font. For details please see the proposed method section.
%
%
%
\paragraph{Scalarization.} A standard approach to finding a finite set approximation of $\mc{P}$ is called the scalarization method. A scalarization method employs a function $\Omega(\cdot): \mb{R}^m \rightarrow \mb{R}$ and provides a natural connection between a set of trade-off preferences $\mc{W}=\{w\in \mb{R}_+^m | \sum_{i=1}^m w_i = 1\}$ among $m$ objectives to the corresponding Pareto set $\mc{P}$. A simple scalarization function is the weighted-sum scalarization:\looseness=-1
$$
\Omega_{ws} (x,w) = \sum_{i=1}^m w_i f_i(x).
$$
Therefore, for a given preference vector $w^\prime \in \mc{W}$, one can find the corresponding Pareto optimal solution $x({w^\prime})$ by solving the following optimization problem:
\begin{equation}\label{moo-ws}
x^\ast({w^\prime}) = \argmin_{x \in \mc{X}} \Omega_{ws} (x,w^\prime).
\end{equation}
By repeatedly solving (\ref{moo-ws}) for a predefined set of trade-off preference vectors, one can obtain a set of Pareto optimal solutions. %
However, the solution obtained by solving (\ref{moo-ws}) corresponds to the specific preference vector $w^\prime$ and does not provide any information about how the solution would change with a slight change in $w^\prime$. 
Moreover, the scalarization-based method can only provide a finite set approximation of the whole Pareto set $\mc{P}$,
whereas finding the whole Pareto set can be useful as a user can explore the corresponding Pareto font of the objective space based on their preference. 
\paragraph{Pareto set learning (PSL).}
In PSL~\citep{lin2022pareto}, instead of solving (\ref{moo-ws}) directly, we learn a mapping $\theta \in \Theta:$
\begin{equation}\label{PSL}
w \mapsto h_{\theta}(w):=x(w),
\end{equation}
and the original problem (\ref{moo-ws}) of scalarized MOO is converted to the following optimization :
\begin{equation}\label{eq:obj-psl}
\theta^\ast = \argmin_{\theta \in \Theta} \Omega \big(x(w)=h_\theta (w), w \big), \quad \forall w \in \mc{W}.
\end{equation}
Then, one can apply the standard gradient update rule to learn the mapping parameter $\theta:$\looseness=-1 
$$
\theta_{t+1} = \theta_t -\eta \sum_{w \sim \mc{W}} \nabla_\theta \Omega(\cdot, w),
$$
where $\Omega$ is a differentiable scalarization function,  $\eta$ is the learning rate and $t \in [T]$ is any arbitrary iteration. 
Once the mapping is properly learned, one can write
\begin{equation}\label{eq:learned-PS}
x^\ast(w) = h_{\theta^\ast}(w), \quad \forall w \in \mc{W},
\end{equation}
where $\theta^\ast$ is the optimum mapping parameter of the learned Pareto set.
Such a mapping enables us to model the manifold of the Pareto set instead of a finite set approximation. Therefore, with such a model a decision-maker can readily explore any trade-off area in the learned Pareto set (hence, in the Pareto front) by adjusting the preferences ($w$) among objectives for flexible decision-making.
\paragraph{Limitations of existing PSL methods.}
However, existing methods of PSL naively leverage an arbitrary (unevenly distributed) set of discrete points on the Pareto front via randomly generated preference vectors and connect them by regression~\citep{lin2022pareto}. This leads to a poor approximation of the manifold of the entire Pareto font which is readily evident for MOO with more than two objectives or complex MOO with non-convex or irregular Pareto font. Another problem of the existing PSL method is the inefficient training strategy which leads to slow convergence and poor approximation to the true manifold. 
The optimal solution set for any scalarization is unknown and we need to optimize all solutions over infinite preferences $(|\mc{W}| =\infty)$ to produce a good fit to the true Pareto set/font. This becomes more difficult in the context of BO where the true analytical forms of the objectives are unknown.
\citet{lin2022pareto} proposed to use Monte Carlo sampling $(\{w_1, \ldots, w_K\} \sim \mc{W}$, where $K$ is finite) and applied iterative gradient update rule:
$$
\theta_{t+1} = \theta_t -\eta \sum_{i =1}^K \nabla_\theta \Omega(\cdot, w_i).
$$
Such a direct or full fitting (Monte Carlo sampling) strategy can be highly inefficient due to prohibitive time and sample complexity for complex learning problems, such as for expensive multi-objective optimization (EMO) problems. 
%
\paragraph{Optimization-based modeling.}Recently, an alternative approach called optimization-based modeling or end-end-learning (E2E) has gained significant interest in the machine learning community for solving complex problem~\citep{gould2016differentiating,domke2012generic}.
This is an optimization technique where one can incorporate domain knowledge or specialized operations into an E2E machine learning pipeline typically in the form of a parametrized  \textit{arg min} operation to make the learning faster and accurate~\citep{johnson2016composing,amos2018differentiable,belanger2017end,agrawal2020learning}.
For example, in an E2E energy-based learning, given $n$ labeled training data points, $\mc{D}_n= \{x_i, y_i\}_{i=1}^n$, a parameterized energy function $E(x, y; \theta)$, and a differentiable loss function $L(x, y; \theta)$;  the energy ($E$) minimization and model ($\theta$) optimization are separated into two steps, where the solution of energy minimization (or a truncated energy minimization i.e., energy minimization is possibly incomplete) is used to train the model parameter which leads to faster convergence and better accuracy~\citep{belanger2017end,domke2012generic}. Basically, this is a bilevel optimization framework where the inner level objective determines the optimal prediction $y^\ast$ (or sub-optimal $\hat y$) for a given model parameter $\theta$, and at the outer level the goal is to identify the optimum model parameter $\theta^\ast$ which corresponds to $y^\ast$ (or $\hat y$), thus collectively minimizing $L$.\looseness=-1 
\section{Proposed Method}
\paragraph{Preference-optimized Pareto set learning (PO-PSL).}
Following the optimization-based modeling, we propose a bilevel optimization for preference-optimized Pareto set learning and solve (\ref{eq:obj-psl}) in an iterative manner: 
\begin{equation}\label{eq:w_estimate}
w^\ast \leftarrow \argmin_{w} \Omega (x, w; \theta),
\end{equation}  
\begin{equation}\label{eq:theta_estimate}
\theta^\ast \leftarrow \argmin_{\theta} \Omega (x, w; \theta),
\end{equation}
First $w$ is optimized by fixing $\theta$. As a result, we obtain the optimal solution $w^\ast$ which is subsequently used to update $\theta$.
To learn the optimum parameter $\theta^\ast$ one can consider a gradient-based optimization where an iterative update rule for any iteration $t \in [T]$ can be written as\looseness=-1
 \begin{align}\label{eq:update_rule}
 \nonumber
 \theta_{t+1} &= \theta_t -  \eta \nabla_\theta \Omega  \big(x, w; \theta \big),\\
 &= \theta_t -  \eta \nabla_\theta \Psi \big(w^\ast(x;\theta), w \big),
\end{align}
 %
%
 where $\eta \in \mb{R}$ is the learning rate and $\Omega$ is an implicit function of $\Psi$, another differentiable loss which compares the predicted optimum preference $w^\ast(x;\theta)$ to the true preference $w$. 
If it is possible to solve \eqref{eq:w_estimate} and $\frac{dw^\ast}{d\theta}$ exists, then one can apply implicit function theorem (Theorem~\ref{thm:implicit_fun}) to get the gradient update of the mapping parameter $\theta$ and make the optimization procedure end-to-end learnable~\citep{domke2012generic,belanger2017end,belanger2016structured,lecun2006tutorial}. 
 \begin{theorem}\label{thm:implicit_fun}
 Define $w^\ast$ as in \eqref{eq:w_estimate} and $\Omega(\theta) = \Psi(w^\ast (\theta))$. Then, when all the derivatives exist,
 $$
 \nabla_\theta \Omega \vert_{w=w^\ast} = - \frac{\partial^2 \Omega}{\partial \theta \partial {w^\ast}^\top} \Big( \frac{\partial^2 \Omega}{\partial w^\ast \partial {w^\ast}^\top} \Big)^{-1} \frac{\partial \Psi}{\partial w^\ast}.
 $$
 \end{theorem}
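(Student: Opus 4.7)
The plan is to derive the formula as a standard application of the implicit function theorem to the first-order optimality condition, followed by the chain rule. The theorem's hypothesis states that all the relevant derivatives exist, which tacitly includes the invertibility of the inner Hessian $\partial^2 \Omega / \partial w^\ast \partial {w^\ast}^\top$; I will take this as given since it is needed to invoke the implicit function theorem.

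First I would write down the first-order stationarity condition characterizing $w^\ast$. Because $w^\ast(\theta) = \argmin_w \Omega(w; \theta)$, we have $\partial \Omega / \partial w \, (w^\ast(\theta), \theta) = 0$ as an identity in $\theta$. Differentiating this identity with respect to $\theta$ and using the chain rule gives
\begin{equation*}
\frac{\partial^2 \Omega}{\partial w^\ast \partial {w^\ast}^\top} \, \frac{\partial w^\ast}{\partial \theta} + \frac{\partial^2 \Omega}{\partial w^\ast \partial \theta^\top} = 0,
\end{equation*}
so that
\begin{equation*}
\frac{\partial w^\ast}{\partial \theta} = -\Bigl(\frac{\partial^2 \Omega}{\partial w^\ast \partial {w^\ast}^\top}\Bigr)^{-1} \frac{\partial^2 \Omega}{\partial w^\ast \partial \theta^\top}.
\end{equation*}

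Next I would differentiate the composite $\Omega(\theta) = \Psi(w^\ast(\theta))$ by the chain rule:
\begin{equation*}
\nabla_\theta \Omega = \Bigl(\frac{\partial w^\ast}{\partial \theta}\Bigr)^\top \frac{\partial \Psi}{\partial w^\ast}.
\end{equation*}
Substituting the expression for $\partial w^\ast / \partial \theta$ obtained above, and using the symmetry of the Hessian $\partial^2 \Omega / \partial w^\ast \partial {w^\ast}^\top$ so that its inverse is also symmetric, together with the transpose identity $(\partial^2 \Omega / \partial w^\ast \partial \theta^\top)^\top = \partial^2 \Omega / \partial \theta \partial {w^\ast}^\top$, yields precisely the formula in the statement.

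The only real subtlety is bookkeeping of shapes, transposes, and signs in the matrix calculus: making sure the Hessian acts on the right side of the mixed partial and that the transpose converts $\partial^2 \Omega / \partial w^\ast \partial \theta^\top$ into $\partial^2 \Omega / \partial \theta \partial {w^\ast}^\top$. There is no deep obstacle; everything follows from the implicit function theorem once the inner Hessian is assumed invertible, which is implicit in the blanket assumption that all derivatives exist.
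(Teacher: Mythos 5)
Your proposal is correct and follows essentially the same route as the paper's own proof: differentiate the first-order stationarity condition $\partial\Omega/\partial w\,(w^\ast(\theta),\theta)=0$ in $\theta$, solve for $\partial w^\ast/\partial\theta$, and combine with the chain rule applied to $\Psi(w^\ast(\theta))$. Your version is in fact slightly more careful about transposes and the symmetry of the Hessian than the paper's, but there is no substantive difference in the argument.
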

 \begin{proof}
 Let $g(w, \theta):=\frac{\partial \Omega(w, \theta)}{\partial w}$. Then
 \begin{align*}
 g(w^\ast, \theta)&=\frac{\partial \Omega(w, \theta)}{\partial w} \Big\vert_{w=w^\ast} = 0, \quad (\text{using \eqref{eq:w_estimate}})\\
 \therefore \frac{\partial g (w^\ast, \theta)}{\partial \theta} &= \frac{\partial g}{\partial w^\ast} \frac{\partial w^\ast}{\partial \theta} + \frac{\partial g}{\partial \theta} = 0,\\
 \implies \frac{\partial w^\ast}{\partial \theta} &= - \frac{\partial g}{\partial \theta}\Big(\frac{\partial g}{\partial w^\ast} \Big)^{-1},\\
 &=-\frac{\partial^2 \Omega}{\partial \theta \partial {w^\ast}^\top} \Big( \frac{\partial^2 \Omega}{\partial w^\ast \partial {w^\ast}^\top} \Big)^{-1}.\\
 & (\text{using vector notation})
\end{align*}
Now,
\begin{align*}
\nabla_\theta \Omega \vert_{w=w^\ast} &= \frac{\partial \Psi}{\partial w^\ast} \frac{\partial w^\ast}{\partial \theta},\\
&= -\frac{\partial^2 \Omega}{\partial \theta \partial {w^\ast}^\top} \Big( \frac{\partial^2 \Omega}{\partial w^\ast \partial {w^\ast}^\top} \Big)^{-1}\frac{\partial \Psi}{\partial w^\ast}.
\end{align*}
 \end{proof}
 Note that in the update rule (\ref{eq:update_rule}), $\Omega(\theta)=\Psi(w^\ast(\theta)),$ i.e., $w^\ast$ is a function of $\theta$ and we need to be able to compute $\frac{d w^\ast}{d\theta}$. However, computing the derivative exactly through the non-convex \textit{arg min} in (\ref{eq:w_estimate}) is challenging both in theory and practice, and several numerical methods have been developed in this context. One direction of studies focussed on gradient unrolling method~\citep{domke2012generic,andrychowicz2016learning,gould2016differentiating,finn2017model,belanger2017end,monga2021algorithm,chen2022learning} that leverages the chain rule of differentiation and unroll the gradient through backpropagation.
Another approach is the use of a zeroth-order gradient estimation method such as the recently proposed~\citep{amos2020differentiable} differentiable cross entropy method (DCEM).
\paragraph{Zeroth-order gradient estimation.}
Zeroth-order gradient estimation method such as cross entropy method (CEM)~\citep{rubinstein1997optimization,de2005tutorial,bharadhwaj2020model,pinneri2021sample,amos2020differentiable} or evolutionary strategy (ES) method~\citep{li2020evolution,hansen2003reducing,he2022adaptive} has been successfully used in many applications.
It estimates the gradient by generating a sequence of samples from the objective function.
For example in CEM, a sequence of samples is generated considering a parameterized prior distribution. The parameter of the distribution is iteratively updated by refitting the sample distribution to the top $k$ samples by solving a maximum-likelihood problem. 
DCEM~\citep{amos2020differentiable} is a differentiable CEM that leverages the Limited Multi-Label Projection (LML) layer~\citep{amos2019limited} to make the non-differentiable top-$k$ operation differentiable. This enables the output of CEM differentiable with respect to the objective parameter and allows CEM to be integrated as a part of the end-to-end machine-learning pipeline.
The zeroth-order method is often preferred over gradient unrolling for large-scale complex optimization problems as it is robust against data noise and generally less susceptible to hyperparameter overfitting~\citep{amos2020differentiable,he2022adaptive}. 
Such limitations of gradient unrolling in the case of structured prediction energy networks (SPENs)~\citep{belanger2016structured,belanger2017end} have been demonstrated in section~4.1.and 5.1. of~\cite{amos2020differentiable}. We used DCEM as a differentiable optimization layer (DOL) in our implementation and it allows us to compute $\diff{w^\ast}{\theta}$  efficiently. 
For details please see the DCEM paper.
Therefore, using DCEM we estimate $w^\ast(x;\theta)$ by solving the following:
 \begin{align*}\label{eq:likelihood-model}
 w^\ast(x;\theta) &:= \mb{E} (g_{\hat{\phi}}(\cdot)),\\
 \hat{\phi} &:= \argmax_\phi \mc{L} (\phi;w, x),\\
 w &\sim g_{\phi} (\cdot),\\
 x&:=h_\theta(w),
 \end{align*}
 where $\mc{L}(\cdot)$ is the likelihood function, $\mb{E}$ is the expectation operator and $g_\phi(\cdot)$ is the prior distribution of $w$, parameterized by $\phi$. 
 \paragraph{Preference optimization.}
 The input to our Pareto set model is the preference vector $w$ located on the simplex \eqref{eq:simplex}.
\begin{equation}
\label{eq:simplex}
    \mc{W} = \{w \in \mathbb{R}^{m} \mid w \geq \mathbf{0}, \mathbf{1}^T w = 1\}.
\end{equation}
Unlike the existing PSL method~\citep{lin2022pareto} which randomly samples a group of preference vectors, we select an optimal set of preference vectors \eqref{eq:w_estimate} as input based on some reference points (a set of random points uniformly distributed along the ideal axis of individual objectives, see Fig.~\ref{diagram}). Given a reference point $z$ and the current learned neural network $h_\theta(\cdot)$, a preference vector $w$ can be seen as the direction starting from the $z$. These directions point towards the current learned Pareto front and intersect with it. 
\paragraph{Loss function.}
We choose $\theta$ based on some loss function $Q$. This loss function can be a reference point-based scalarization, such as weighted-Tchebycheff $(\Omega_{tch})$ or augmented weighted-Tchebycheff scalarization $(\Omega_{aug\_tch})$ or penalized boundary intersection (PBI) $(\Omega_{pbi})$~\citep{zhang2007moea}: 
\begin{align*}
Q = \Omega_{tch} (x, w, z) &:= \max_{i\in [m]}  \{ w_i  |f_i(x) - z_i |\},\\
Q=\Omega_{aug\_tch} (x, w, z) &:= \max_{i\in [m]}  \{ w_i  |f_i(x) - z_i | + \rho \sum_{i\in[m]} w_i f_i(x)\},\\
Q=\Omega_{pbi} (x, w, z) &:= d_1 + \rho d_2, \quad \text{where }
\end{align*}
$$
d_1=\frac{|| (\mathbf{f} (x) - z )^\top w ||}{|| w ||}, \quad d_2=|| \mathbf{f}(x) - (z + d_1 w) ||, \hst \text{and}
$$
$\rho > 0$ is a penalty parameter. 
 To ensure that the model learns the correct Pareto front while maintaining diversity, one can add a penalty term $\zeta$ to the scalarization term $\Omega$:
\begin{equation}\label{eq:penalized_loss}
Q = \Omega + \lambda \zeta,
\end{equation}
where $\lambda>0$ is a tuning parameter.
A common example of the penalty is the cosine similarity~\citep{ruchte2021scalable,chen2022multi} between a preference vector $w$ and the generated Pareto font solution:
$$
\zeta = \frac{w^\top \mathbf{f}(x)}{||w|| \cdot || \mathbf{f}(x) ||}.
$$
We introduced a novel penalty term~\eqref{eq:pbi_const} using a neighborhood of $w^\ast$ that promotes diversity (Fig.~\ref{penalty}). It is motivated by the penalty used in~\cite{chen2022multi}. 
\begin{align}\label{eq:pbi_const}
\nonumber
    \zeta &= \exp(-c_1c_2), \\
    \nonumber
    c_1 &= \frac{\sqrt{2}}{2}- \cos\alpha, \\
    c_2 &= \cos\alpha - 1,\\
     \nonumber
    \cos\alpha &= \frac{(\mathbf{f}(w)-\mathbf{f}(w^\ast))^\top(z-\mathbf{f}(w^\ast))}{\|(\mathbf{f}(w)-\mathbf{f}(w^\ast))\| \|(z-\mathbf{f}(w^\ast))\|}.
\end{align}
\begin{figure}[t]
\centering
\includegraphics[width=0.5\linewidth]{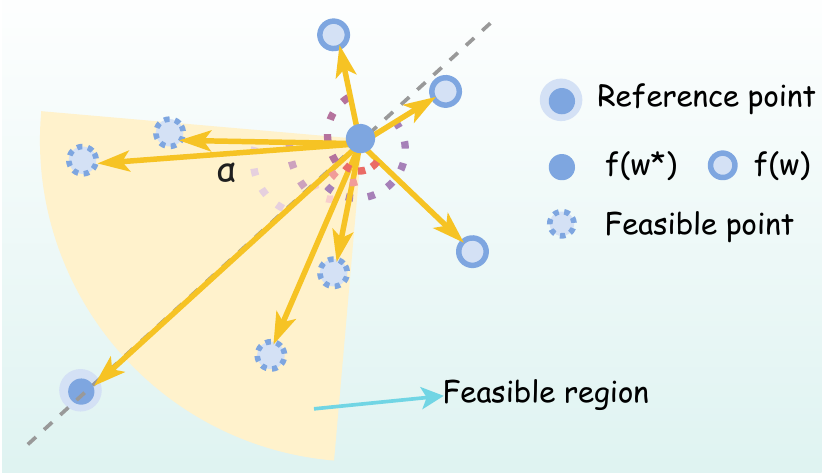}
\caption{An intuitive illustration of the penalty term used in our loss function. This constructs a cone with $\mathbf{f}(w^\ast)$ as the vertex and a half-apex angle of $\frac{\pi}{4}$. We aim for the $w$ in the neighborhood to fall within this cone as much as possible. }
\label{penalty}
\end{figure}
%
 %
%
\paragraph{Training and Algorithm of PO-PSL.}
For any reference point-based differentiable scalarization function our method works as follows. We choose a set of $r$ uniformly random reference points ($Z=\{z_1,\ldots, z_r\}$) which correspond to a set of coordinate points along the ideal axis of individual objectives (Fig.~\ref{diagram}).
 Then the DCEM is trained to predict the preference vector $w^\ast(z)$ corresponding to each reference point $z$. The predicted $w^\ast(z)$  is then used as input to the Pareto set model $h_\theta(w^\ast(z))$ to generate a Pareto optimal solution $x(z)$. The objective value $f(x(z))$ evaluated at $x(z)$ is then compared against the corresponding reference point $z$ using a penalized loss function $Q$~\eqref{eq:penalized_loss}. The algorithm of our proposed PO-PSL method has been provided in Algorithm~\ref{algo-EE-PSL}. To learn the set model $h_\theta(\cdot)$ we used a neural network model. 
Our algorithm starts with a random initialization of the model parameter $\theta=\theta_0$. Then at every iteration $t\in [T]$, $\theta$ is updated based on the averaged error signal of all $r$ reference points which makes the learned Pareto font progress towards to true Pareto font.
 Note that at any iteration $t$, the set model $h_{\theta_t}(\cdot)$ learns an approximate Pareto font manifold 
corresponding to learned parameter $\theta_t$ which allows for flexible exploration of the learned Pareto font along that manifold for any chosen preference $w$. 
%
%
\begin{center}
\begin{minipage}{\linewidth}
\begin{algorithm}[H]
\caption{Preference-optimized Pareto set learning}
\label{algo-EE-PSL}
\begin{algorithmic}[1]
\State \textbf{Input:} Model $x = h_\theta (w)$, $r:$ \# of reference points  
\State Initialize $\theta = \theta_0$
\For {$t=0$ to $T-1$}
\State $w^\ast (z) = \text{DCEM} (\theta_t, z), \quad \forall z \in Z:=\{z_1, \ldots, z_r\}$
\State $\theta_{t+1} = \theta_t -\eta \sum_{z \in Z} \nabla_{\theta} Q(x= h_{\theta}(w^\ast(z)))$ 
\EndFor
\State \textbf{Output:} $h_{\theta_T}(\cdot)$
\end{algorithmic}
\end{algorithm}
\end{minipage}
\end{center}
\section{Results and Discussions}
To evaluate the performance of our proposed PO-PSL, we compared it with some state-of-the-art (SOTA) methods on several complex synthetic and real-world benchmark data. 
 \paragraph{Baseline Algorithms.} The baseline algorithms we chose are the SOTA PSL-MOBO~\citep{lin2022pareto} and the classical MOBO algorithm, DGEMO~\citep{konakovic2020diversity}. The implementations of PSL-MOBO and DGEMO are from their open-source codebases. 
 \paragraph{Benchmarks and Real-world problems.} The algorithms are compared on the following widely-used benchmark data, ZDT3, and DTLZ5, the ground truths of which are provided by the library PYMOO~\citep{pymoo}. The ZDT3 and DTLZ5 have a disconnected and a degenerate Pareto front, respectively. We also test all methods on a real-world rocket injector design (RE5) problem, the ground truth of which is obtained from the DGEMO GitHub page. 
\paragraph{Experimental setting.} 
Gaussian processes are used as the surrogate model for all methods. We repeated our experiments five times and reported the average and standard deviation results of five runs. We used hypervolume difference (HVD) and inverted generational distance (IGD) to evaluate the performance~\citep{pymoo}.
 \paragraph{Batch selection.} 
We use the loss function $Q$ as the selection criterion directly. We first select a group of candidate Pareto solutions with the lowest loss scores for each reference point and then select a batch of samples for evaluation from this group to prevent the selected samples concentrated on the single reference point. 
\paragraph{Hardware platform.} We conduct all the experiments on the Linux GPU server equipped with one V100, 24 cores of CPU (Intel Xeon Gold 6136), 376GB memory, and 32GB GPU memory.  
  \paragraph{PO-PSL setting.} 
The proposed method PO-PSL is implemented in PyTorch and an open-source GitHub implementation is provided. In our experiments, we mainly used the PBI scalarization. Additional results using augmented Chebycheff scalarization are provided in Fig.~\ref{atch}. The mapping network of the Pareto set model contains three MLPs with 256 hidden nodes, and the activation function of the model is the Exponential Linear Unit (ELU) function. 
For DCEM, we sample 1000 preference vectors, and the elite neighborhood number is set to 100. In every iteration, models are trained with batched reference points with a batch size of 8 for each objective, which means there are a total of $8\times 2$ reference points for two objective problems. For every reference point, we select a group of 100 preference vectors. 
The performance of all the PSL-based methods is evaluated on 100 corresponding predictions of the mapping networks with randomly generated 100 preference vectors.\looseness=-1 
%
%
\begin{figure}[h!]
 \centering
 \includegraphics[width=\linewidth]{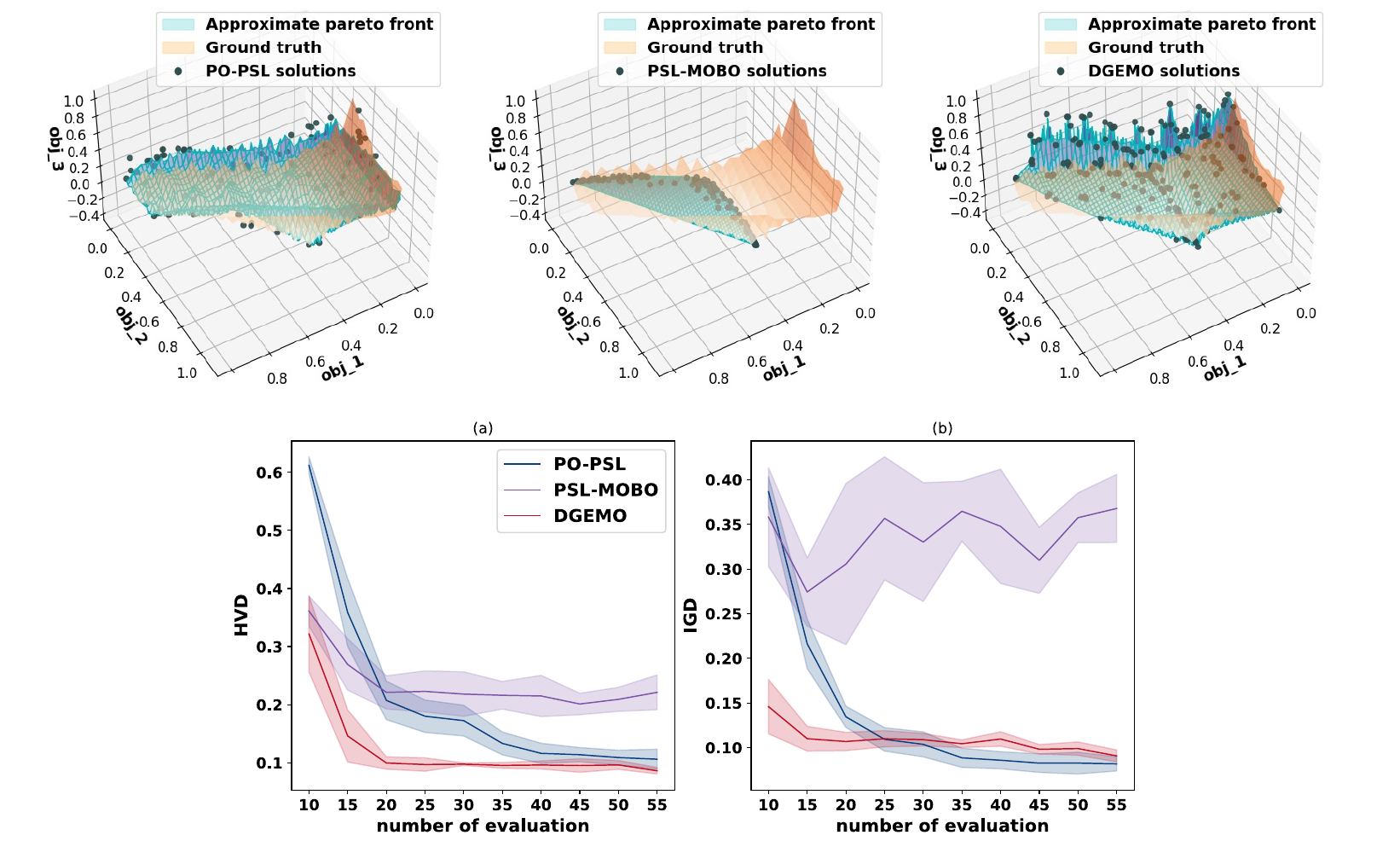}
 \caption{RE5 (continuous PF). First row: PF approximation using different methods. Second row: Sampling efficiency using HVD and IGD. We select 10 random samples as the initial set, and add 5 new samples at each iteration.}
 \label{re-pf}
 \end{figure}
%
%
\begin{figure}[h!]
 \centering
 \includegraphics[width=\linewidth]{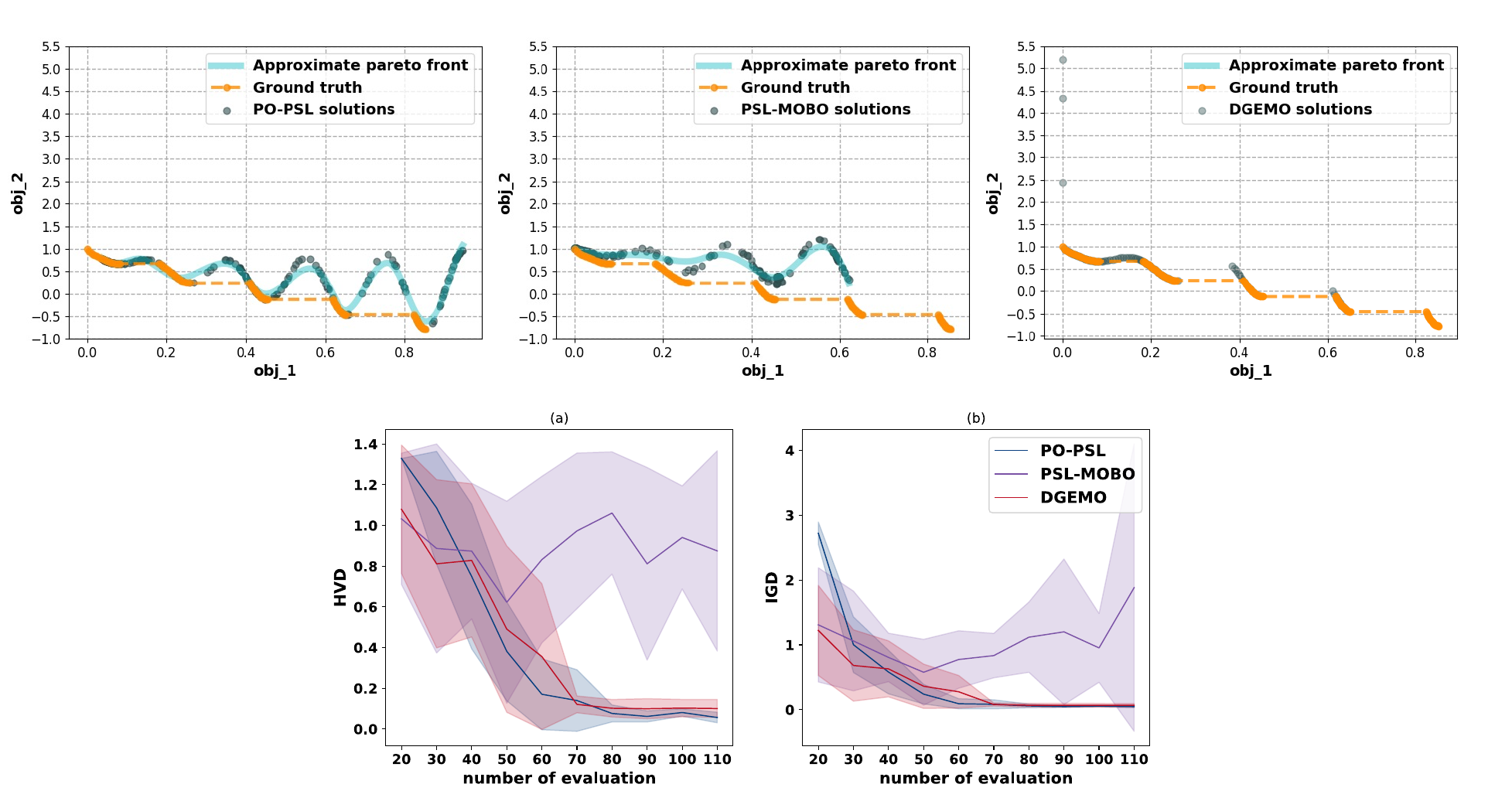}
 \caption{ZDT3 (irregular PF). First row: PF approximation using different methods. Second row: Sampling efficiency using HVD and IGD. We select 20 random samples as the initial set, and add 10 new samples at each iteration.}
 \label{zdt3-pf}
 \end{figure}
%
%
\begin{figure}[h!]
 \centering
 \includegraphics[width=\linewidth]{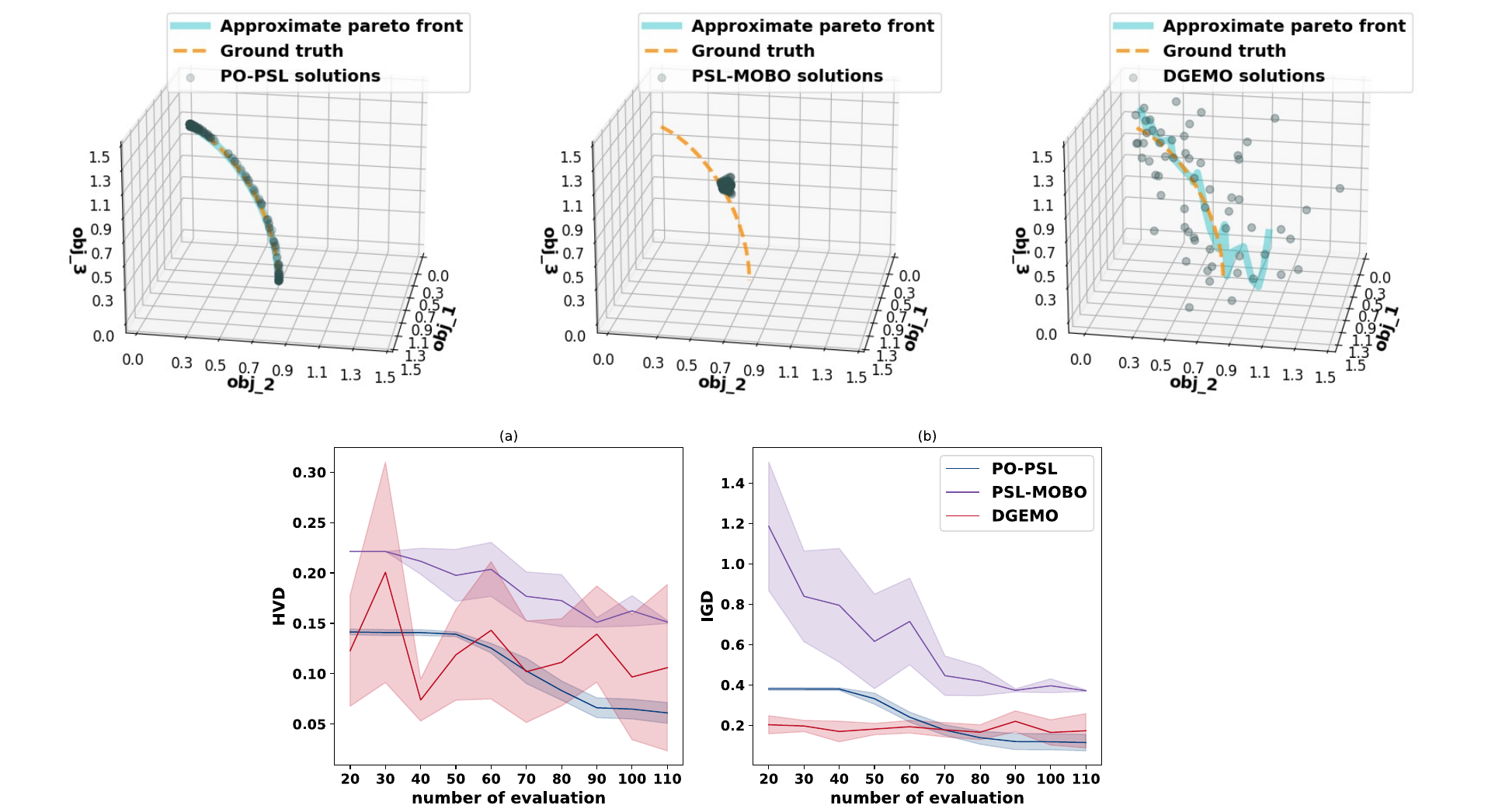}
 \caption{DTLZ5 (degenerate PF). First row: PF approximation using different methods. Second row: Sampling efficiency using HVD and IGD. We select 20 random samples as the initial set, and add 10 new samples at each iteration.}
 \label{dtlz5-pf}
 \end{figure}
%
%
%
%
  \begin{table*}[t]
  \caption{Computational Time (modeling + batch selection) in seconds.}
  \label{table-comp-time}
  \centering
  \begin{tabular}{c c c c c c}
    \hline
    Problem & \#objs. & \#vars.  & PSL-MOBO & PO-PSL  & DGEMO \\
 \hline
    ZDT3 &2 &6 & 0.65 + 2.81 & 0.33 + 0.35 & 3.07 + 0.41  \\
    DTLZ5 &3 &6 & 0.57 + 3.87 & 0.37 + 0.34 & 4.19 + 0.68 \\
    RE5 &3 &4 & 0.79 + 5.49 & 0.35 + 0.33 & 2.64 + 0.17 \\
  \hline
  \end{tabular}
\end{table*}
\paragraph{Sampling and computational efficiencies.} 
PO-PSL shows faster convergence than PSL-MOBO in terms of both HVD and IGD, and produces better PF approximation than others (Fig.~\ref{re-pf},\ref{zdt3-pf},\ref{dtlz5-pf}). It can be observed that PO-PSL is computationally more efficient than others (Table.~\ref{table-comp-time}). 
\paragraph{Ablation study.} Additionally, we conducted two ablation studies: one to verify the impact of different reference point settings on the learned Pareto front~(Fig.~\ref{abl-reference}), and another to verify the effect of the penalty~(Fig.~\ref{abl-penalty}) used in the loss function~\eqref{eq:penalized_loss}. We set different numbers and distributions of reference points. Our results find that different reference point settings can significantly affect the exploration of the Pareto front. Reference points that are evenly distributed on the orthogonal basis contribute to comprehensively exploring the Pareto front, while unevenly distributed points cause the learned Pareto front to concentrate in certain areas. 
We design the penalty item to ensure that the model learns the correct Pareto front while maintaining diversity. We find that the learned Pareto front will contain more outliers without the penalty item, and the half-apex angle can affect the shape of the learned Pareto front. 
%
\begin{figure}[h!]
 \centering
 \includegraphics[width=\linewidth]{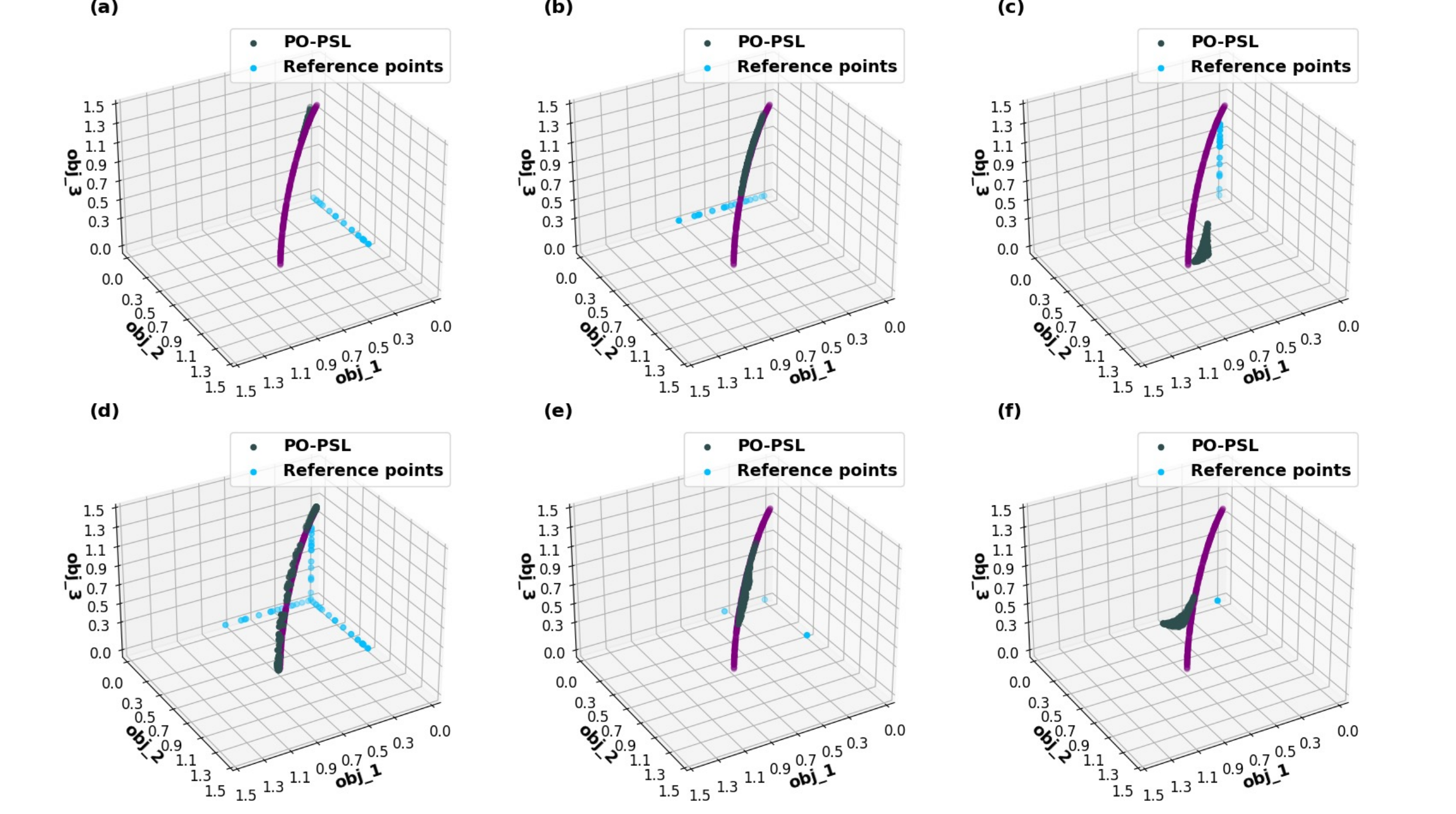}
 \caption{Ablation study for reference points. (a) uses 16 reference points located only on the $obj_1$. (b) uses 16 reference points located only on the $obj_2$ axis. (c) uses 16 reference points located only on the $obj_3$ axis. (d) uses reference points evenly distributed across all three orthogonal bases with $16 \times 3$ points. (e) uses reference points distributed across the three orthogonal bases with 3 points. (f) uses only one reference point.}
 \label{abl-reference}
 \end{figure}
\begin{figure}[h]
 \centering
 \includegraphics[width=\linewidth]{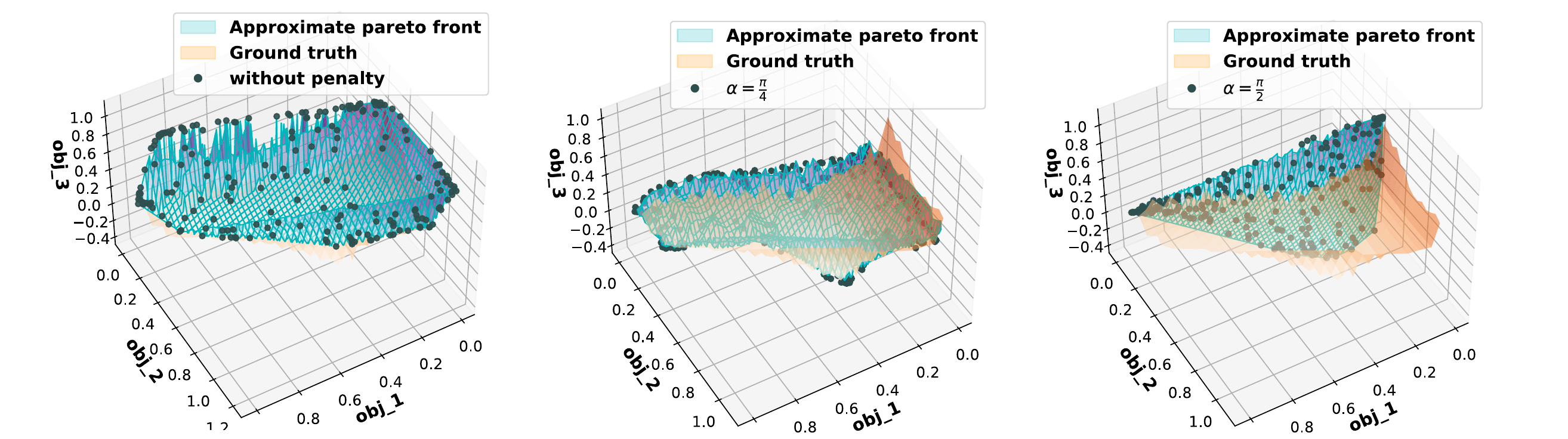}
 \caption{Ablation study for penalty in~\eqref{eq:penalized_loss}. We conduct the ablation study on the dataset of RE5 with three settings: without penalty, setting $\alpha$ as $\frac{\pi}{4}$, and setting $\alpha$ as $\frac{\pi}{2}$.}
 \label{abl-penalty}
 \end{figure}
\paragraph{Results using Tchebycheff scalarization.} Our approach can support different scalarization methods. We also implemented the augmented Tchebycheff scalarization method and the results are shown in Fig.~\ref{atch}. Models are trained with batched reference points with a batch size of 256 in every iteration. We repeated our experiments five times and reported the average and standard deviation results of five runs. Here we considered a multi-layer perceptron (MLP) surrogate model composed of three fully connected layers with 256 hidden nodes.%
\begin{figure}[h]
 \centering
 \includegraphics[width=\linewidth]{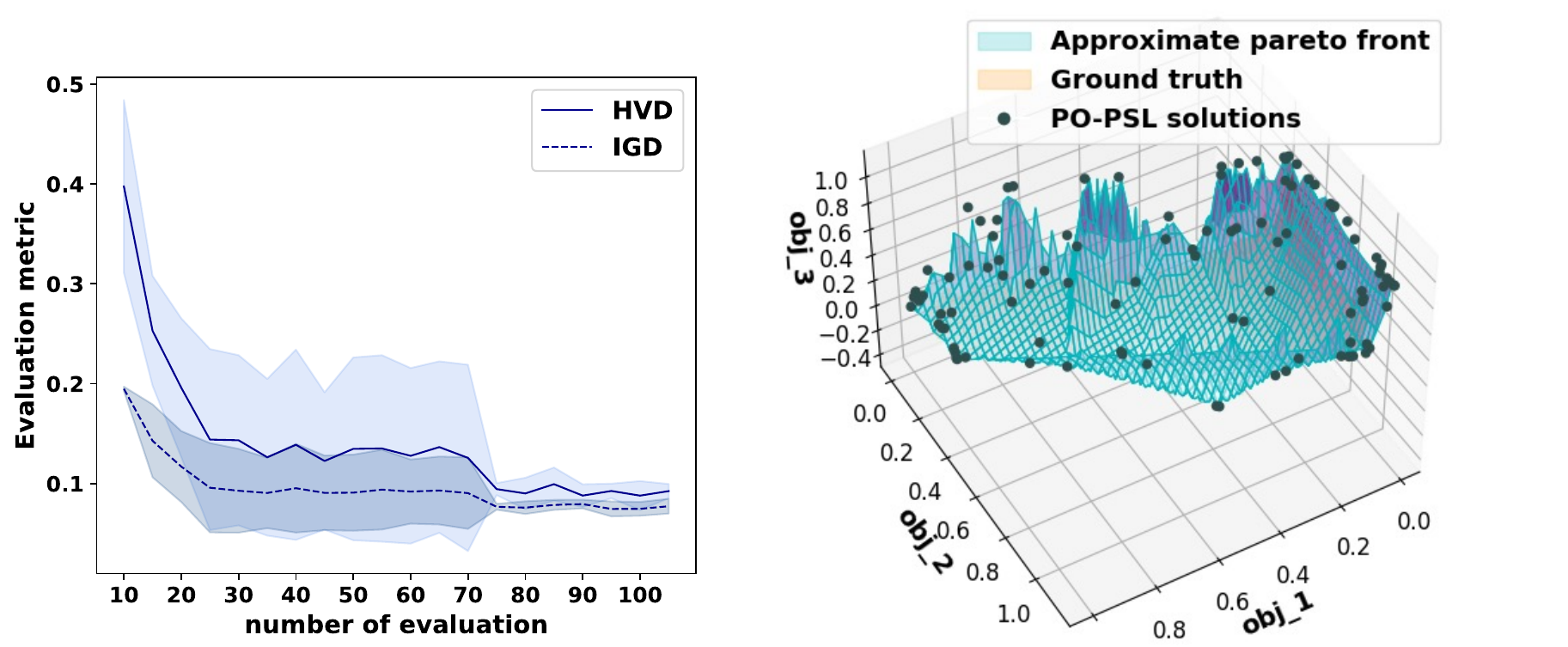}
 \caption{Results using the augmented Tchebycheff scalarization method. We conducted experiment using the RE5 data. Left: Sampling efficiency is evaluated in terms of HVD and IGD. Right: Pareto front approximation by PO-PSL.}
 \label{atch}
 \end{figure}
 \section{Conclusion}
We have proposed a novel Pareto set learning (PSL) algorithm that uses optimization-based modeling. Our formulation is a bilevel optimization algorithm and allows flexible exploration of the entire Pareto font. 
We have experimentally demonstrated that our proposed method significantly outperforms the existing methods on PSL in terms of sampling efficiency, computational efficiency, and accuracy in approximating the entire Pareto set. In particular, for complex optimization problems where the existing methods can only make partial predictions and struggle to predict at the boundary points, our method can accurately predict the entire Pareto set. 
 \section{Approximation and Pareto Optimality}
In this paper, we are interested in approximating the whole Pareto set/font of a MOO~\eqref{eq:moo}.
For real-world MOO problems, the analytical forms of the objective functions are often not known in advance. Hence, our goal is to have a good approximation of (\ref{eq:moo}) and obtain an $\epsilon$-approximate Pareto set/font for any arbitrarily small $\epsilon>0$ as stated in Theorem~\ref{theorem-ee-psl-pareto-opt}. Similar approximation and Pareto optimality guarantee have been provided for PSL of multiobjective neural combinatorial optimization~\citep{lin2022-PSL-MOCO}.\looseness=-1 
 \begin{theorem}\label{theorem-ee-psl-pareto-opt}
If our proposed method can generate an $\epsilon$-approximate solution $x \prec_\epsilon x^\ast$, where $x^\ast$ is the optimal solution of (\ref{eq:moo}), then it is able to generate an $\epsilon$-approximate Pareto set $\mc{P}_\epsilon$ to the corresponding MOO problem.
 \end{theorem}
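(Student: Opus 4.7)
The plan is to leverage the parametrization of the Pareto set by preference vectors that scalarization provides, and then push the pointwise $\epsilon$-approximation hypothesis through to a set-level statement. First I would fix notation: write $x^\ast(w) := \argmin_{x \in \mc{X}} \Omega(x, w)$ for the scalarized optimum associated with a preference $w \in \mc{W}$, and let $\hat{x}(w) = h_{\theta^\ast}(w)$ be the output of the learned PSL model. The hypothesis of the theorem then reads $\hat{x}(w) \prec_\epsilon x^\ast(w)$ for every $w \in \mc{W}$, i.e.\ $f_i(\hat{x}(w)) \leq f_i(x^\ast(w)) + \epsilon$ for all $i \in [m]$.

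Next I would recall the standard coverage property of a reference-point based scalarization such as $\Omega_{tch}$ or $\Omega_{pbi}$: for every Pareto optimal point $x^\ast \in \mc{P}$, there exists a preference vector $w \in \mc{W}$ such that $x^\ast$ is the (or \emph{a}) minimizer of $\Omega(\cdot, w)$. This is the well-known ability of Tchebycheff/PBI scalarizations to recover all Pareto optimal points, including those on non-convex portions of the front. Using this, I would define the candidate approximate Pareto set produced by our method as
\begin{equation*}
\mc{P}_\epsilon := \{\hat{x}(w) \mid w \in \mc{W}\} = \{h_{\theta^\ast}(w) \mid w \in \mc{W}\}.
\end{equation*}

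Then I would verify that $\mc{P}_\epsilon$ is indeed an $\epsilon$-approximate Pareto set. For any $x^\ast \in \mc{P}$, pick the corresponding $w$ from the coverage step, so $x^\ast = x^\ast(w)$. The hypothesis gives $\hat{x}(w) \prec_\epsilon x^\ast(w) = x^\ast$, hence there is a member of $\mc{P}_\epsilon$ that $\epsilon$-dominates (or is $\epsilon$-close in the objective sense to) $x^\ast$. Doing this for every $x^\ast \in \mc{P}$ shows that $\mc{P}_\epsilon$ covers the true Pareto set $\mc{P}$ up to $\epsilon$ in objective space, which is the definition of an $\epsilon$-approximate Pareto set/front.

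The main obstacle I anticipate is making the coverage step rigorous in a way that is consistent with the scalarization actually used (PBI in the experiments, augmented Tchebycheff optionally). For weighted-sum scalarization this step fails on non-convex regions of the front, so I would be careful to invoke a scalarization under which every Pareto optimal solution arises as a minimizer for some $w \in \mc{W}$, and cite the standard result for Tchebycheff/PBI. A secondary subtlety is reconciling the approximation notion $\prec_\epsilon$ (pointwise additive slack in objectives) with the target notion of an $\epsilon$-approximate Pareto set; the cleanest route is to take both to be the componentwise additive-$\epsilon$ dominance used in \cite{lin2022-PSL-MOCO}, so that the transfer from per-preference guarantees to set-level guarantees is essentially definitional once coverage is in hand.
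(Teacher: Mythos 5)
The paper does not actually supply a proof of this theorem; it defers entirely to the citation \cite{lin2022-PSL-MOCO}, and your argument is essentially a faithful reconstruction of the proof given there: the Tchebycheff-type coverage lemma (every Pareto optimal point is a scalarized minimizer for some $w\in\mc{W}$) combined with the per-preference $\epsilon$-dominance hypothesis yields the set-level guarantee for $\mc{P}_\epsilon=\{h_{\theta^\ast}(w)\mid w\in\mc{W}\}$. Your caveats about weighted-sum failing on non-convex fronts and about aligning the two $\epsilon$-dominance notions are exactly the right points to make rigorous, so the proposal matches the intended approach.
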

 For the proof of Theorem~\ref{theorem-ee-psl-pareto-opt}, please see~\cite{lin2022-PSL-MOCO}. 
 \section{Limitation and Future Work} 
 The approximation guarantee made in Theorem~\ref{theorem-ee-psl-pareto-opt} depends on the approximation ability of the proposed algorithm. However, recent theoretical advancement on the convergence guarantee of bilevel optimization is quite promising~\citep{ji2021bilevel,fu2023convergence}. More specifically the author in \cite{fu2023convergence} provided sublinear regret bound for bilevel Bayesian optimization, suggesting convergence of both upper-level and lower-level parameters of bilevel optimization. Considering the recent advancement in theoretical convergence of bilevel optimization we are optimistic to design a more efficient algorithm for PO-PSL with rigorous theoretical guarantee which we consider as a potential future direction of research.

\bibliographystyle{plainnat}
\bibliography{myref}


\end{document}